\newtheorem{theorem}{Theorem}
\newenvironment{tightenumerate} 
{\vspace{-\topsep}\begin{enumerate}\itemsep1pt \parskip0pt \parsep1pt}
{\end{enumerate}\vspace{-\topsep}}
\begin{document}

\title{\LARGE \bf
Safe Balancing Control of a Soft Legged Robot}

\author{Ran Jing$^{1}$, Meredith L. Anderson$^{1}$, Miguel Ianus-Valdivia$^{1}$, Amsal Akber Ali$^{1}$, Carmel Majidi$^{2}$,\\ Andrew P. Sabelhaus$^{1,3}$
\thanks{This work was in part supported by the Office of Naval Research under Grant No. N000141712063 (PM: Dr. Tom McKenna), the National Oceanographic Partnership Program (NOPP) under Grant No. N000141812843 (PM: Dr. Reginald Beach), and an Intelligence Community Postdoctoral Research Fellowship through the Oak Ridge Institute for Science and Education.}
\thanks{$^1$R. Jing, M.L. Anderson, M. Ianus-Valdivia, A. Akber Ali and A.P. Sabelhaus are with the Department of Mechanical Engineering, Boston University, Boston MA, USA. {\tt\small \{rjing, merland, maiv, aakber, asabelha\}@bu.edu} }
\thanks{$^2$C. Majidi is with the Department of Mechanical Engineering and the Robotics Institute, Carnegie Mellon University, Pittsburgh PA, USA. {\tt\small cmajidi@andrew.cmu.edu}}
\thanks{$^3$A.P. Sabelhaus is also with the Division of Systems Engineering, Boston University, Boston MA, USA.}
}


\maketitle
\pagestyle{empty}  
\thispagestyle{empty} 

\begin{abstract}

Legged robots constructed from soft materials are commonly claimed to demonstrate safer, more robust environmental interactions than their rigid counterparts.
However, this motivating feature of soft robots requires more rigorous development for comparison to rigid locomotion.
This article presents a soft legged robot platform, Horton, and a feedback control system with safety guarantees on some aspects of its operation. 
The robot is constructed using a series of soft limbs, actuated by thermal shape memory alloy (SMA) wire muscles, with sensors for its position and its actuator temperatures.
A supervisory control scheme maintains safe actuator states during the operation of a separate controller for the robot's pose.
Experiments demonstrate that Horton can lift its leg and maintain a balancing stance, a precursor to locomotion.
The supervisor is verified in hardware via a human interaction test during balancing, keeping all SMA muscles below a temperature threshold.
This work represents the first demonstration of a safety-verified feedback system on any soft legged robot.
\end{abstract}












\section{Introduction}
Locomotion of robots built from soft and compliant materials has the potential to expand exploration efforts in extreme, delicate, or dangerous locations that are unsafe or difficult to reach for either humans or traditional rigid robots \cite{Rus_Tolley_2015,Calisti_fundamentals_2017}. 
However, the source of these robots' benefits -- conforming to unstructured environments and unanticipated disturbances -- introduces significant challenges in control, modeling, and mechanical design \cite{Rich_untethered_2018}.
Most soft robot locomotion has been limited to simple motions in open-loop \cite{scott_geometric_2020,Huang2019,patterson_untethered_2020,bern_trajectory_2019}, a stark contrast to the intelligent full-body control of rigid systems \cite{ames_rapidly_2014}, which often come with provable performance properties.


\begin{figure}[t]
    \centering
    \includegraphics[width=1\columnwidth]{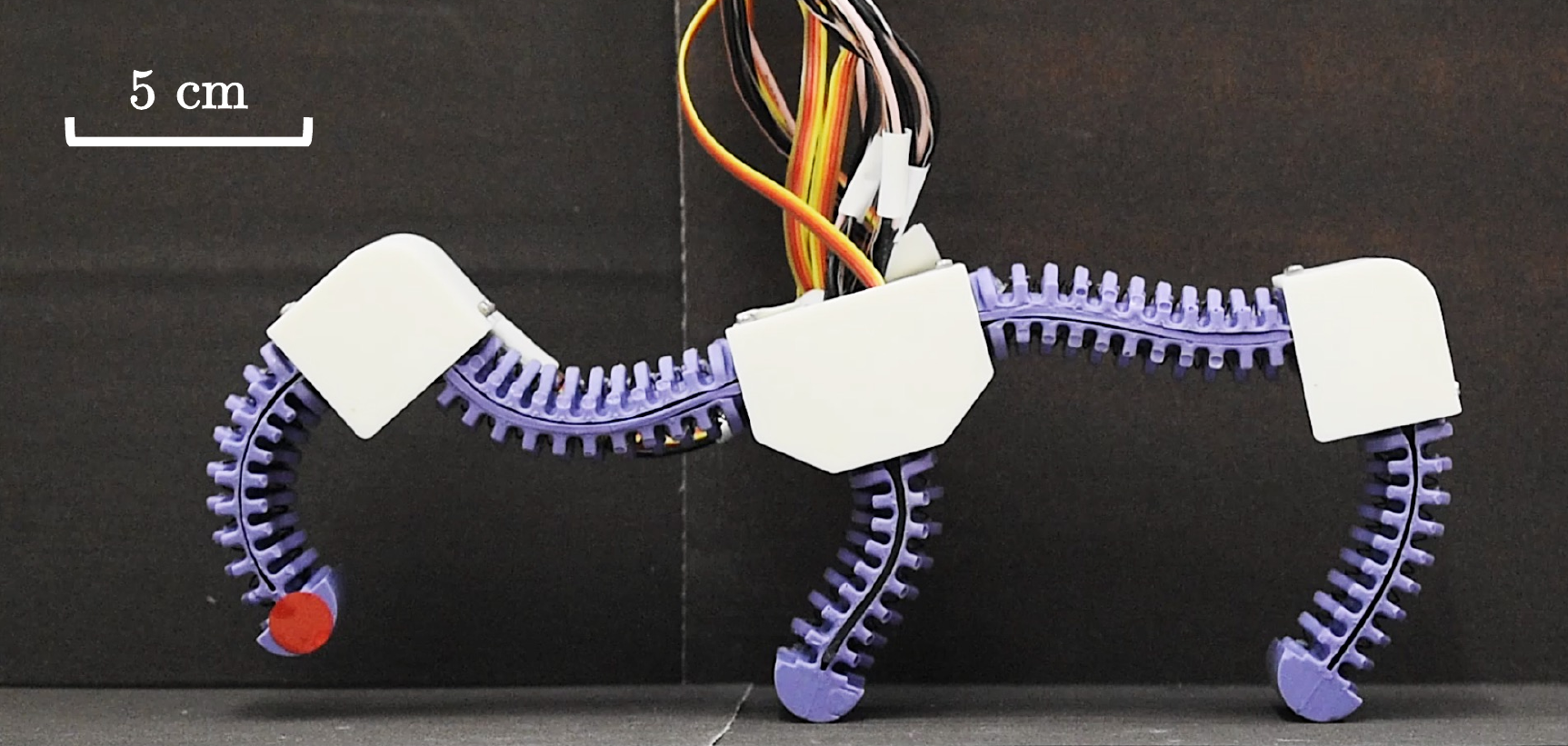}
    \caption{The soft legged robot Horton demonstrates dynamic balancing motions, and full pose control, using feedback with some formal safety verification.}
    \label{fig:horton_balancing}
    \vspace{-0.3cm}
\end{figure}

This article makes progress in bridging the gap.
We introduce a soft legged robot, Horton, with sufficient actuation, sensing, and feedback to demonstrate dynamic balancing (Fig. \ref{fig:horton_balancing}).
Horton's feedback control framework includes a safety verification for its most failure-prone component, the shape memory alloy (SMA) artificial muscles.
This is the first framework and experimental validation of full pose control with any verifiable safety for any soft legged robot, using dynamic balancing as a precursor to future walking locomotion \cite{ott_posture_2011}.



\begin{figure*}[t]
    \centering
    \includegraphics[width = \textwidth]{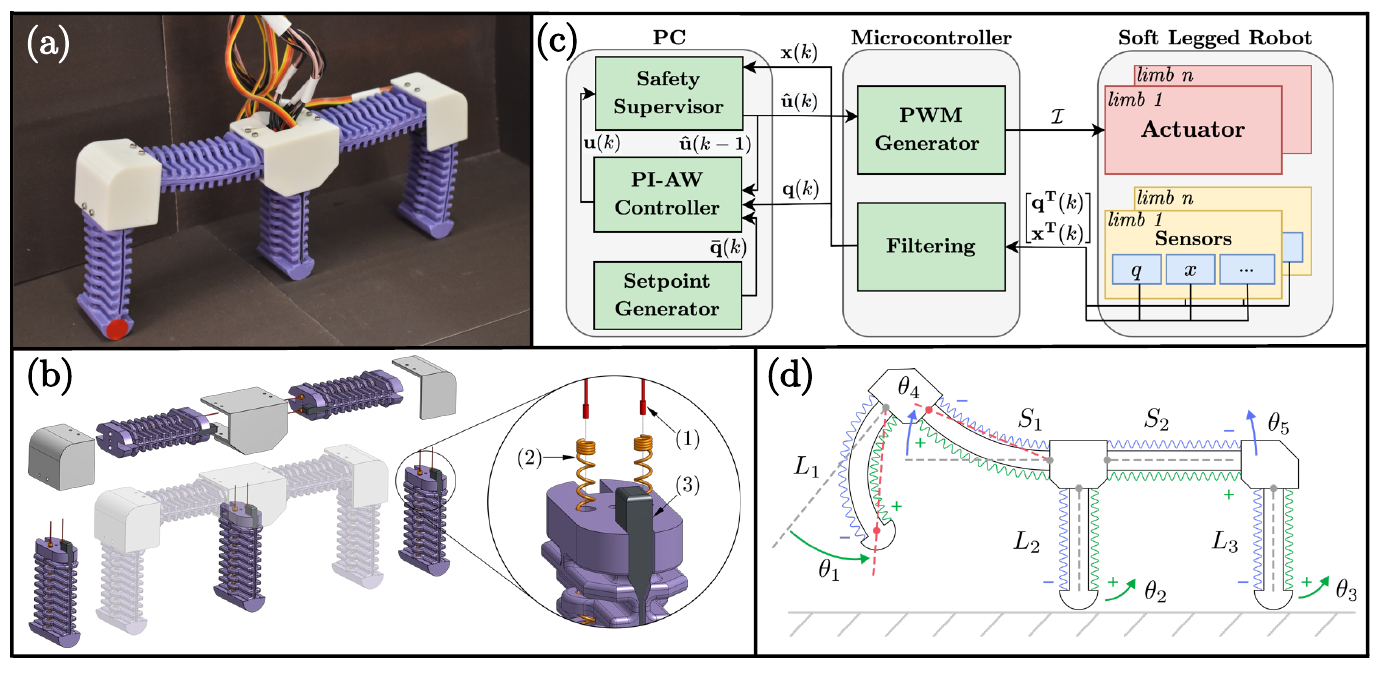}
    \caption{The Horton robot platform contains five SMA-actuated limbs arranged for planar motions (a), connected via 3D-printed brackets (b) with temperature sensors (b)(1) connected to the two SMAs per-limb (b)(2) alongside sensors for bending angle (b)(3). The robot's feedback system operates on an adjacent computer and microcontroller (c), which controls the angles $\theta_1 \hdots \theta_5$ of each of the five limbs (d). The ten corresponding SMAs are labeled according to leg/spine segment ($L_i/S_i$) and direction of induced bending ($+/-$).}
    \label{fig:horton_hardware_and_system_architecture}
    \vspace{-0.5cm}
\end{figure*}

\subsection{Robot Locomotion and Control, Soft or Rigid}

Mobile soft robots rely on the elastic deformation of their bodies for locomotion \cite{Calisti_fundamentals_2017}, often preventing the use of full state feedback while also limiting versatility.
Friction-based gaits such as crawling \cite{Tolley_quadrupedal_2014,Schiller_gecko_2020,Onal_snake_2013,Thomas_inchworm_2021} rely on continuous contact with the ground for stability, resulting in slow locomotion and inability to overcome obstacles of larger sizes.
Jumping motions are inherently limited in the accuracy of their landing location \cite{tolley_untethered_2014,Mintchev2018,Huang2019}.
Progress toward controlled soft locomotion has therefore focused on legged robots, but prior work has either been restricted to aquatic environments for stability \cite{Cianchetti_OCTOPUS_2015,Calisti_Poseidrone_2015,patterson_untethered_2020} or been limited with speed and unverified feedback approaches \cite{Li_modular_2022}.


In contrast, rigid legged robots can typically locomote over large environmental obstacles, in structured or semi-structured environments \cite{lee_learning_2020}.
Since dynamic balancing and locomotion are unstable, feedback control over the robot's entire pose is required \cite{majumdar_control_2014}.
Bringing these capabilities to soft robots will require new architectures that can execute similar control systems.

\subsection{Control and Safety}

State-of-the-art locomotion of rigid robots has been enabled by feedback control with verifiable properties, such as stability of hybrid systems \cite{sreenath_compliant_2011} and invariance \cite{ames_rapidly_2014}.
Set invariance formally defines safety: if a robot's states remain within some set for all time under the action of feedback, its operation is safe \cite{gilbert_linear_1991,blanchini_set_1999}.
In contrast, stable soft robot controllers have only been shown for manipulation \cite{della_santina_model-based_2020,della_santina_control_survey}.



Since balancing and locomotion involve environmental interaction, disturbances and unmodeled contact are major failure modes.
Soft robots, though informally `safe' by way of limiting their forces or positions \cite{lasota_survey_2017,tonietti_design_2005}, commonly use novel actuators \cite{zhang_robotic_2019} with possible catastrophic failure modes in these situations.
Under a strong disturbance, Horton's thermoelectric SMA actuators would overheat \cite{soother_challenges_2020} and fail, as would equivalent designs using pneumatics (bursting \cite{terrynSelfhealingSoftPneumatic2017}) and dielectrics (electrical breakdown or arcing \cite{planteLargescaleFailureModes2006,bilodeauSelfHealingDamageResilience2017}). 
As of yet, approaches for safer operation of soft robots of any type have relied on simple hard stops \cite{villanuevaBiomimeticRoboticJellyfish2011,coloradoBiomechanicsSmartWings2012,whiteSoftParallelKinematic2018,yeeharntehArchitectureFastAccurate2008}, with only limited examples of intelligent shutoff \cite{balasubramanian_fault_2020}.




\vspace{-0.1cm} 
\subsection{Contribution}



The Horton architecture (Fig. \ref{fig:horton_hardware_and_system_architecture}) contributes a platform particularly well-suited to studying balancing control and safety in soft robots.
Horton uses SMA-powered limbs to create high forces, strains, and speeds due to their work density \cite{Rich_untethered_2018}, extending prior work in soft limb design \cite{walters2011digital,wertz_trajectory_2022,patterson_robust_2022,patterson_untethered_2020,sabelhaus_-situ_2022}.
Consequently, Horton has dynamic motions and the associated instabilities that motivate control research in rigid legged robots.
This work applies a supervisory controller to Horton's SMAs during pose control, adapted for multiple soft limbs from prior work \cite{sabelhaus_safe_2022}.
Specifically, this article contributes:

\vspace{0.1cm}
\begin{tightenumerate}
    \item A soft robot legged robot platform with sufficient state sensing and actuation for full pose control,
    \item The first application of a verifiably-safe controller to a soft legged robot, and
    \item A demonstration of safe balancing motions during an otherwise-unsafe disturbance test.
\end{tightenumerate}


\vspace{0.1cm}
\section{Robot Architecture}

Horton is constructed from five soft limbs, arranged to create planar motions, with pairs of antagonistically-arranged SMA wire coils (Fig. \ref{fig:horton_hardware_and_system_architecture}(a),(b)) that produce bidirectional bending (Fig. \ref{fig:horton_hardware_and_system_architecture}(d)).
This simple planar design facilitates the development of fundamental control results for soft legged robots.
The robot is also connected to a tether for power and communication (Fig. \ref{fig:horton_hardware_and_system_architecture}(a)), oriented upward from the center of the body, with loose cables.
Though the tether may exert some external forces on the robot, those forces were not sufficient to achieve dynamically stable balancing -- i.e., the robot was often observed to fall during initial experiments.



\subsection{Mechanical Design, Actuation, and Sensing}

The soft limbs used in Horton are adapted from a prior design \cite{sabelhaus_-situ_2022}, now with 3D-printed brackets to connect them.
The molded body of the limb (Smooth-Sil 945, Smooth-On) houses the two SMA springs (Dynalloy Flexinol $90^\circ$C, 0.020'' diam.) in slotted compartments, allowing rapid convective cooling. 
Attached to the end of each SMA is a thermocouple (5TC-TT-K-36-72, Omega), affixed using thermally conductive, electrically insulating epoxy. 
A soft capacitive angular displacement sensor (Bendlabs 1-axis) is inserted along a slot in the side of the limb.

The SMAs are powered individually by MOSFETs connected to a 7V power supply, controlled by pulse width modulation voltage signals $\bu(k) \in [0,1]$ from a microcontroller. 
Pulling the MOSFETs high creates a current through the SMA, causing contraction due to Joule heating. 
The microcontroller sends temperature and bend angle measurements to a computer, which in turn specifies PWM duty cycle (Fig. \ref{fig:horton_hardware_and_system_architecture}(c)).



This article takes the robot body's state space as the set of angles $\bq = \begin{bmatrix}\theta_1 & \hdots & \theta_5\end{bmatrix}^\top$, Fig. \ref{fig:horton_hardware_and_system_architecture}(d).
Prior work has shown how tip-tangent angle readings of capacitive bend sensors $\alpha_i$ can be converted into bending angles $\theta_i$ via assumption of piecewise constant curvature (PCC) \cite{toshimitsu_sopra_2021,wertz_trajectory_2022,patterson_robust_2022}.
If PCC holds, $\theta_i = \alpha_i/2$.
Horton's operation departs substantially from PCC.
However, we focus on proof-of-concept balancing in \textit{any} pose while leaving kinematic tracking for future work.

\subsection{System Architecture}


Controller calculations occur on a laptop (Core i5, 2.6 GHz, 16 GB RAM) attached to an Arduino Mega microcontroller (Fig. \ref{fig:horton_hardware_and_system_architecture}(d)).
In this framework, sensor readings include both pose states $\bq(k)$ and other internal states $\bx(k)$, which here are the robot's SMA temperatures.
At time $k$, a controller for the robot's pose calculates an input signal $\bu(k)$ intended to regulate the robot around a specified setpoint $\bar\bq(k)$, which is passed through the supervisor to get the safety-guaranteed PWM signal $\hat\bu(k)$, inducing heating current $\mathcal{I}$.
The following section details the signals and computations, including the proportional-integral controller with anti-windup (PIAW) for pose control.
This architecture is generalizable to other soft legged robots with actuator dynamics.


\section{Supervisory Control for Soft Legged Robots}

The robot presented here experiences unsafe operation of its actuators under disturbances.
A simple on/off hard stop on temperature causes chatter.
Instead, this article's supervisory control system dynamically saturates the input signal to maintain a provable safety specification.

\subsection{System Model}

Soft robots are highly nonlinear systems in general \cite{della_santina_control_survey}, and many state parameterizations exist for the dynamics of soft bodies and actuators \cite{goury_fast_2018,della_santina_model-based_2020,huang_dynamic_2020}.
The most common model-based representations for control \cite{armanini_soft_2021} result in a system of ordinary differential equations, which we assume are discretized appropriately:

\begin{equation}
    \bz(k+1) = f(\bz(k), \bu(k))
\end{equation}

\noindent with states $\bz \in \mathbb{R}^N$ and inputs $\bu \in \mathbb{R}^P$.

This article focuses on a subset of these states $\bx \in \mathbb{R}^M$ for which safety concerns exist, as in $\bz = [\bq^\top \; \bx^\top]^\top$.
In the Horton robot, the states $\bx = \begin{bmatrix} T_1 & \hdots & T_m \end{bmatrix}^\top$ are the SMA wire temperatures, which will readily exceed a safe limit under feedback architectures that do not account for disturbances.
We leave the body dynamics safety in $\bq$ for future work.

It has been established that the temperature dynamics of SMA wires are accurately approximated by thermal models for Joule heating \cite{mollaei2012optimal,katoch2015trajectory,bhargaw2013thermo,cheng2017modeling,wertz_trajectory_2022,sabelhaus_safe_2022}, which for this study takes the form of a scalar affine system:

\begin{equation}
    T_i(k+1) = -\frac{h_c A_c}{C_v}(T_i(k) - T_0)\Delta_t + \frac{1}{C_v} \Delta_t P_i(k)
\end{equation}

\noindent with input power represented in terms of a pulse-width modulation input $u_i(k) \in [0,1]$ as $P_i(k) = \rho J^2 u_i(k)$.
Here, $h_c$, $A_c$, $C_v$, $\rho$, and $J$ are various constants for material, geometry, and electrical properties, all fixed scalars.
Lumping all unknown constants produces

\begin{equation}
    T_i(k+1) = a_{(1,i)} T_i(k) + a_{(2,i)} u_i(k) + a_{(3,i)}.
\end{equation}

\noindent We assume that $\{a_1, a_2, a_3\}$ will be fit from hardware data on SMA temperature and duty cycle as in prior work \cite{wertz_trajectory_2022}.

Lastly, we transform this system of scalar affine equations into one linear system. 
Augmenting the states with a brief abuse of notation as $\bx_i = \begin{bmatrix} T_i & 1 \end{bmatrix}^\top \in \mathbb{R}^2$ gives each wire's dynamics as $\bx_i(k+1) = \bA_i \bx_i(k) + \bB_i u_i(k)$, where $\bA = [ a_{(1,i)}, \; a_{(3,i)}; \; 0, 1] \in \mathbb{R}^{2 \times 2}$ and $\bB = \begin{bmatrix} a_{(2,i)} & 0 \end{bmatrix}^\top \in \mathbb{R}^2$.
Arranging into one system $\bx(k+1) = \bA \bx(k) + \bB \bu(k)$ produces a block structure:

\begin{equation}\label{eqn:linsys_block}
    \bx(k+1) = \begin{bmatrix}
        \bA_1 & & \\
        & \ddots & \\
        & & \bA_m
    \end{bmatrix} \bx(k) + \begin{bmatrix}
        \bB_1 \\ \vdots \\ \bB_m
    \end{bmatrix} \bu(k).
\end{equation}

\noindent We use eqn. (\ref{eqn:linsys_block}) in most of the remainder of the article to drop the $i$-indexing.
\subsection{The Supervisor's Saturating Controller}

This work assumes the desired safety condition is pre-specified as a bound on $\bx$.
For Horton, bounds are a maximum temperature $T^{MAX}$, and we seek to guarantee that $\bx_i(k) \leq \begin{bmatrix} T^{MAX} & 1 \end{bmatrix}^\top$ $\forall k \in \mathbb{N}^+$, i.e., $\bx(k) \leq \bx^{MAX}$.
The following section adapts our recent result which does so \cite{sabelhaus_safe_2022}, now for a robot with multiple SMA-powered limbs (eqn. (\ref{eqn:linsys_block})).

Consider first the standard result \cite{baggio_data-driven_2019} that a linear system can be driven from $\bx(k)$ to a setpoint $\bx^{SET}$ in $K$-many steps, given reachability conditions, via the minimum-energy control sequence $\bu^*(k) = \bB^\top (\bA^\top)^{K-k-1} \bW_K^\dag (\bx^{SET} - \bA^K \bx(k))$, where $\bW_K$ is the $K$-step controllability Grammian \cite{baggio_data-driven_2019,sabelhaus_safe_2022}.
Taking a one-step ahead window $K=k+1$ as a worst case, with shortest time to reach $\bx^{SET}$,

\begin{equation}\label{eqn:onestep_set}
    \bu^*(k) = \bB^\top (\bB \bB)^\dag (\bx^{SET} - \bA \bx(k)).
\end{equation}

The system in eqn. (\ref{eqn:linsys_block}) is monotone \cite{sabelhaus_safe_2022}, so applying $\bu(k) \leq \bu^*(k)$ guarantees that $\bx(k+1) \leq \bx^{SET}$ elementwise.
Intuitively, applying less current generates lower wire temperatures.
However, to make this approach more robust, we propose to constrain inputs to some fraction of the setpoint amount, $\bu(k) \leq \gamma \bu^*(k)$, where $\gamma \in (0,1)$.
Our prior work showed that maintaining this condition in a closed loop changes the desired equilibrium point \cite{sabelhaus_safe_2022}, and so to obtain $\bx^{MAX}$ as the constraint, the following adjustment is required of the setpoint:

\begin{equation}\label{eqn:setpoint_adjust}
    \bx^{SET} = \left( 1 / \gamma \right) \left(\bI - \left(1 - \gamma \right) \bA \right) \bx^{MAX}.
\end{equation}

\noindent We combine eqns. (\ref{eqn:onestep_set})-(\ref{eqn:setpoint_adjust}) with feedback of $\bx(k)$,

\begin{align}\label{eqn:umax}
    \bu(\bx(k))^{MAX} & = \gamma \bB^\top (\bB \bB^\top)^\dag ( \left( 1 / \gamma \right) (\bI \; \hdots \notag \\
    & - \left(1 - \gamma \right) \bA)  ) \bx^{MAX} - \bA \bx(k) ).
\end{align}

The closed loop system obtained with $\bu(\bx(k)) = \bu(\bx(k))^{MAX}$ can be written in terms of the error $\be(k) \defeq \bx(k) - \bx^{MAX}$ as $\be(k+1) =\gamma \bA \be(k)$, which is exponentially stable under easily anticipated conditions.
However, that does not necessarily guarantee $\bx(k) \leq \bx^{MAX} \; \forall k$, which corresponds to $\be(k) \leq 0$.
To verify this safety property, we prove that the set of states $\mathcal{S} = \{\be \; | \; \be \leq 0\}$ is invariant under the closed-loop dynamics; i.e., $\be(0) \leq 0 \Rightarrow \be(k) \leq 0 \; \forall k \in \mathbb{N}^+$.
Noting that $\mathcal{S}$ is a polyhedron, it is invariant \cite{gilbert_linear_1991,blanchini_set_1999} under the transition matrix $\gamma \bA$ if $\mathcal{S} \subseteq \{ \be \; | \; \gamma \bA \be \leq 0\}$. 
That condition can also be written using set equivalence, 

\begin{equation}\label{eqn:S_invariance}
    \mathcal{S} \cap \{ \be \; | \; \gamma \bA \be \leq 0\} = \mathcal{S}.    
\end{equation}

Calculations from prior work \cite{sabelhaus_safe_2022} showed that eqn. (\ref{eqn:S_invariance}) held true for all experimentally-calibrated SMA thermal dynamics models, and therefore $\mathcal{S}$ is invariant.
We refer the reader to the literature for further discussions concerning the $Pre$ operation implied in eqn. (\ref{eqn:S_invariance}) and maximum invariant set calculations \cite{gilbert_linear_1991,blanchini_set_1999,sabelhaus_safe_2022}.

\subsection{Incorporating Pose Feedback and Verifying Safety}

Operating the robot using $\bu(\bx(k)) = \bu(\bx(k))^{MAX}$ maintains safe temperatures but does not allow for controlling the pose of the robot, i.e. the other states in $\bz$.
Instead, assume that there is another feedback controller $\bv(\bz(k))$ developed independently, which we would like to operate unless it would lead to unsafe states.
Propose the following (elementwise) composition, recalling that $\bx$ is part of the full state $\bz$,


\begin{equation}\label{eqn:awesome_controller}
    \hat u_i(\bz(k)) = \begin{cases}
        v_i(\bz(k)) \quad \; \; \; \text{if} &  v_i(\bz(k)) \leq u_i(\bx(k))^{MAX} \\
        u_i(\bx(k))^{MAX} & \text{else}
    \end{cases}
\end{equation}

The closed loop system under $\hat \bu(\bz(k))$ has the same invariance properties, and therefore safety, as the dynamics under solely $\bu(\bx(k))^{MAX}$. 
Formally:

\begin{theorem}
    The set $\mathcal{S} = \{\be \; | \; \be \leq 0 \}$ is invariant for the closed loop system obtained by applying $\hat\bu(\bz(k))$ to the dynamics of eqn. (\ref{eqn:linsys_block}) if invariance under $\bu(\bx(k))^{MAX}$ has been verified via eqn. (\ref{eqn:S_invariance}).
\end{theorem}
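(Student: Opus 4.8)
The plan is to reduce invariance under the switched controller $\hat\bu$ to the already-verified invariance under the pure safety controller $\bu(\bx(k))^{MAX}$, using the fact that the block system in eqn. (\ref{eqn:linsys_block}) is monotone. I would argue by induction on $k$ with inductive hypothesis $\be(k) \leq 0$ (equivalently $\bx(k) \in \mathcal{S}$), and show the state remains in $\mathcal{S}$ after one step. A preliminary observation worth recording is that the pose-dependence of the external controller $\bv(\bz(k))$ is immaterial: the safety subsystem $\bx$ evolves under eqn. (\ref{eqn:linsys_block}) driven only by the realized input, so all that matters is how $\hat\bu$ compares to $\bu(\bx(k))^{MAX}$ componentwise.

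First I would establish the pointwise domination $\hat\bu(\bz(k)) \leq \bu(\bx(k))^{MAX}$ elementwise, which holds unconditionally. This is immediate from eqn. (\ref{eqn:awesome_controller}): in the first branch $\hat u_i = v_i \leq u_i^{MAX}$ by the guard, and in the second branch $\hat u_i = u_i^{MAX}$ with equality. The switching rule is built precisely so that the applied input never exceeds the safety input in any coordinate.

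Next I would propagate this input inequality through one step. Since the input matrix $\bB$ in eqn. (\ref{eqn:linsys_block}) has nonnegative entries (each block is $\bB_i = [\,a_{(2,i)}\;\;0\,]^\top$ with $a_{(2,i)} > 0$, the positive Joule-heating gain), the map $\bu \mapsto \bA\bx(k) + \bB\bu$ is order-preserving at fixed $\bx(k)$. Evaluating \emph{both} controllers from the same current state $\bx(k)$ then gives $\bA\bx(k) + \bB\hat\bu(\bz(k)) \leq \bA\bx(k) + \bB\bu(\bx(k))^{MAX}$ elementwise; the shared $\bA\bx(k)$ term cancels, isolating the effect of the input alone. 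Subtracting the constant $\bx^{MAX}$ from both sides preserves the inequality, so the switched-loop error $\hat\be(k+1)$ is dominated elementwise by the pure-safety error.

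Finally I would close the induction with the verified condition. Under the inductive hypothesis $\be(k) \leq 0$, the pure-safety closed loop satisfies $\be(k+1) = \gamma\bA\be(k)$, and eqn. (\ref{eqn:S_invariance}) states exactly that $\gamma\bA$ maps the polyhedron $\mathcal{S}$ into itself, so $\gamma\bA\be(k) \leq 0$. Chaining with the domination from the previous step yields $\hat\be(k+1) \leq \gamma\bA\be(k) \leq 0$, i.e. $\bx(k+1) \in \mathcal{S}$ under $\hat\bu$; induction from any $\be(0) \leq 0$ finishes. I expect the only delicate point to be the bookkeeping in the previous step — ensuring the monotonicity comparison is taken with both inputs at the \emph{identical} state $\bx(k)$, so that the $\bA\bx(k)$ contributions cancel and the verified polyhedral condition on $\gamma\bA$ carries the entire safety argument. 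Once that is stated carefully, no new reachability or Grammian computation is needed.
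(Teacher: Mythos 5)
Your proposal is correct and follows essentially the same route as the paper: establish $\hat\bu(\bz(k)) \leq \bu(\bx(k))^{MAX}$ elementwise from the switching rule, push the inequality through one step via monotonicity of the dynamics in the input, and invoke the already-verified invariance under $\bu(\bx(k))^{MAX}$ to close the induction. The only cosmetic difference is that you unpack the verified condition as $\gamma\bA$ mapping $\mathcal{S}$ into itself (with the explicit positivity of the input gain $a_{(2,i)}$), whereas the paper states the same fact directly as $\bA_i\bx_i(k) + \bB_i\bu_i(\bx(k))^{MAX} \leq \bx_i^{MAX}$.
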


\begin{proof}
    Consider any state $\bx_i(k) \leq \bx_i^{MAX}$, i.e. $\be(k) \in \mathcal{S}$. Since $\mathcal{S}$ is invariant under the supervisor alone, $\bA_i \bx_i(k) + \bB_i \bu_i(\bx(k))^{MAX} \leq \bx_i^{MAX}$.
    Then since eqn. (\ref{eqn:linsys_block}) is a monotone control system and by definition of $\hat\bu$ in eqn. (\ref{eqn:awesome_controller}),

    \begin{align}
        \hat \bu_i(\bz(k)) & \leq \bu_i(\bx(k))^{MAX}, \notag \\
        \bA_i \bx_i(k) + \bB_i \hat \bu_i(\bz(k)) & \leq \bA_i \bx_i(k) + \bB_i \bu_i(\bx(k))^{MAX}, \notag \\
        \bA_i \bx_i(k) + \bB_i \hat \bu_i(\bz(k)) & \leq \bx_i^{MAX}, \notag \\
        \therefore \bx_i(k+1) & \leq \bx_i^{MAX}, \notag
    \end{align}

    \noindent so $\be(k+1) \in \mathcal{S}$, and $\mathcal{S}$ is invariant by induction.
\end{proof}

\textit{Remark}.
Our prior work also establishes standard considerations such as e.g. (Lipschitz) continuity \cite{sabelhaus_safe_2022}.

\begin{figure*}[tb]
    \centering
    \includegraphics[width=\textwidth,height=5.5cm]{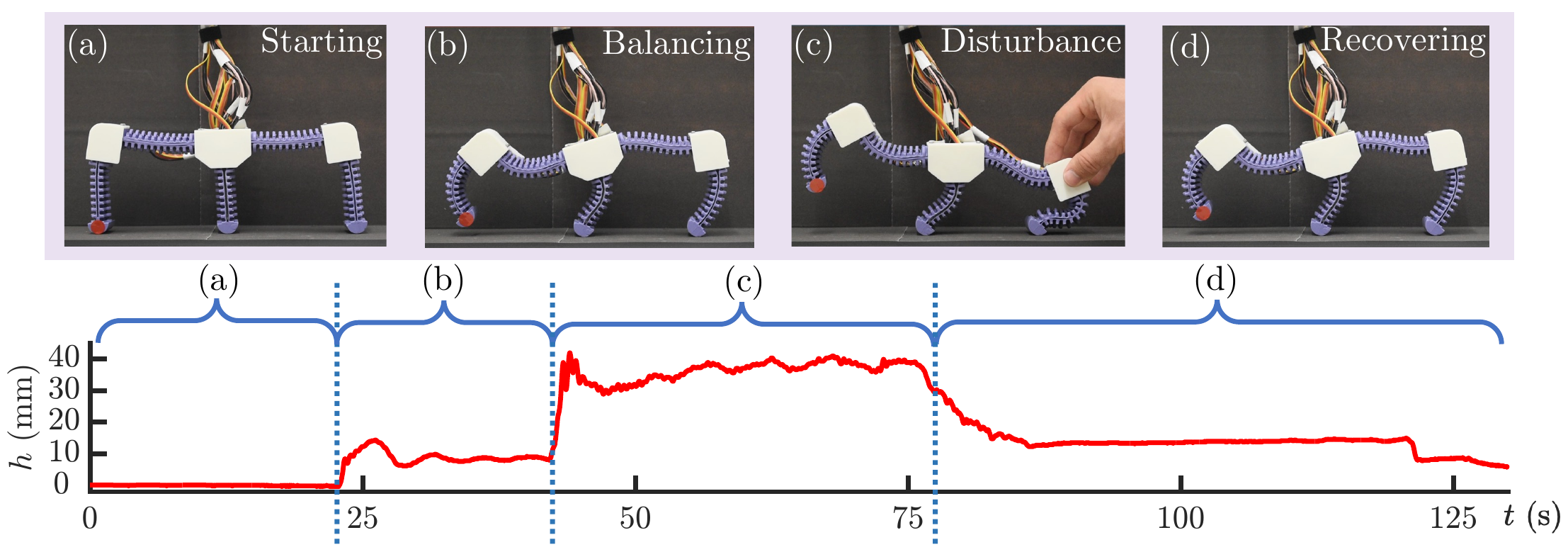}
    \caption{Feedback control test of Horton lifting its front leg. The test's four stages (a)-(d) demonstrate recovery from a human disturbance. Foot height (red dot distance from testbed surface) converges when the robot is undisturbed, showing dynamic balancing. Representative snapshots correspond to steady-state operation during each stage.}
    \label{fig:horton_foot_height}
    \vspace{-0.4cm}
\end{figure*}


\subsection{Pose Controller}

The preceding sections establish the safety of actuator states for any choice of pose controller $\bv(\bz(k))$.
For tests in this article, we use proportional-integral controllers with anti-windup (PIAW) to track a bending angle for each limb of the $J$-many limbs.
To do so, with limb angles as pose states $\bq = \begin{bmatrix} \theta_1 & \hdots & \theta_J \end{bmatrix}^\top$ and target angles as $\bar \bq$, define the error in each limb pose as $\delta_i(k) \defeq \theta_i(k) - \bar\theta_i(k)$.

For the control, group the $j$-th limb's two SMAs into one antagonistic pair as in prior work \cite{patterson_robust_2022,sabelhaus_safe_2022}, forming a single input $\mu_j \in [-1, 1]$ by mapping positive duty cycles to one actuator and negative to the other:

\vspace{-0.3cm}
\begin{align}
    v_{2i} & = \mu_j, \quad v_{2i+1} = 0 \quad \text{if} \; \mu_j \geq 0, \notag \\
    v_{2i} & = 0, \quad v_{2i+1} = -\mu_j \quad \text{if} \; \mu_j < 0. \notag
\end{align}

\noindent Then, dropping indexing, take $\mu$ as saturated output from a commanded signal $\eta$ as $\mu = \text{sat}(\eta)$.
The scalar PIAW controllers are then defined in terms of $\eta$, with sampling time $\Delta_t$, as

\vspace{-0.7cm}
\begin{align}
    \eta(\delta(k)) & = K_P \delta(k) + K_I \Delta_t \sum_{\tau=0}^{k-1} [\delta(\tau) \hdots \notag \\ 
    & + K_A (\mu(\tau-1) - \eta(\tau-1))]
\end{align}

\noindent The anti-windup term $(\mu-\eta)$ compensates for saturation.
All gains for experiments in this article were tuned as in our prior work \cite{sabelhaus_-situ_2022, sabelhaus_safe_2022}, and $\gamma=0.2$.
However, we used $K_A=0.0$ so although anti-windup was not operated in practice, it is implemented as a feature of our framework that was key to prior results \cite{patterson_robust_2022,sabelhaus_safe_2022}.




\vspace{0.1cm}
\section{Balancing with Temperature Supervisor}


We validate Horton's pose feedback and safety-guaranteed control framework together in a balancing test with a human disturbance.
This test mimics an unknown environmental contact that may occur in future locomotion.

\begin{figure*}[tb]
    \centering
    \includegraphics[width=\textwidth,height=7.4cm]{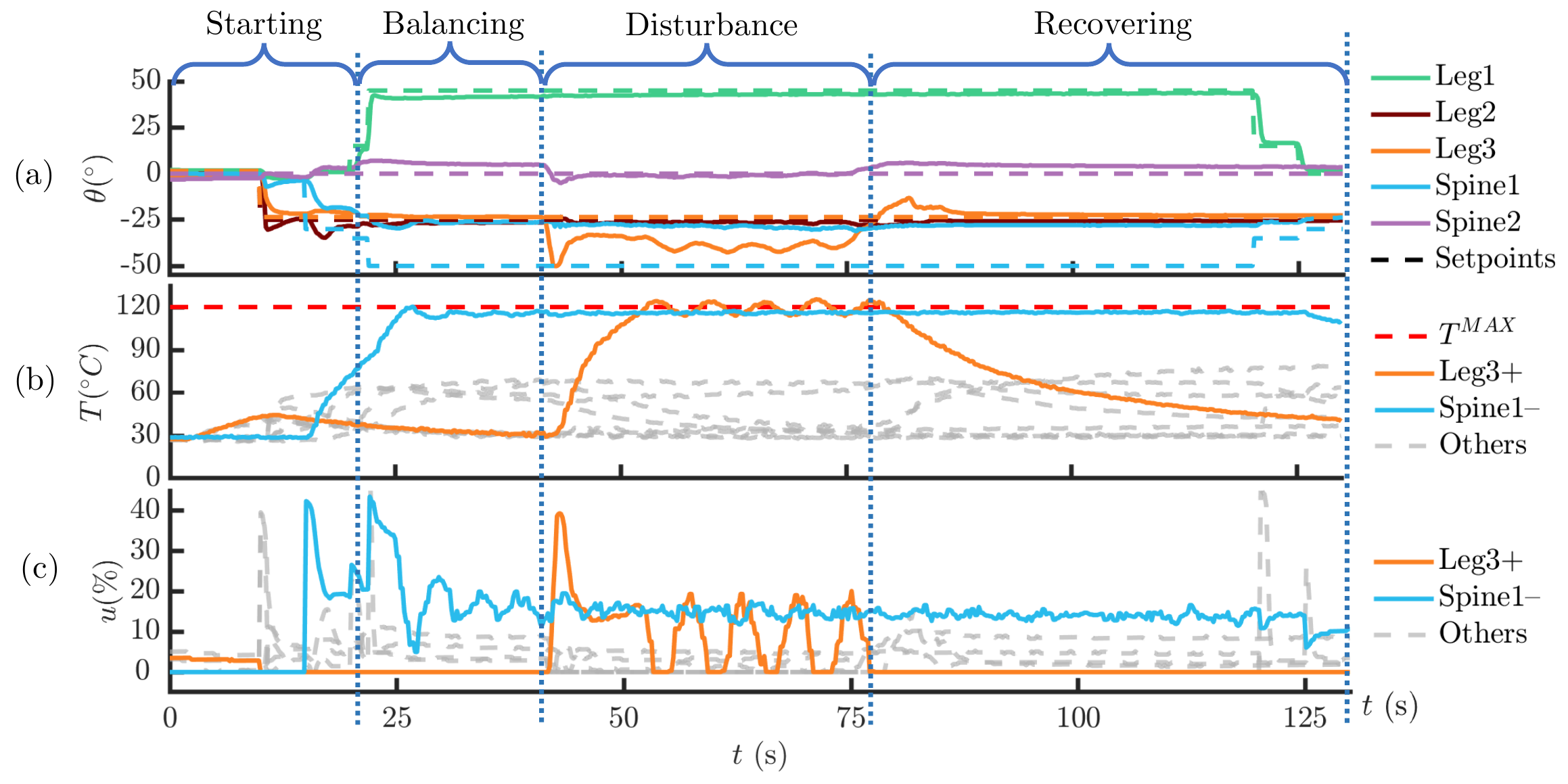}
    \caption{Data recorded during the balancing test from Fig. \ref{fig:horton_foot_height} include (a) the limb bending angles $\bq = [\theta_1, \; \hdots, \; \theta_5]^\top$ and corresponding setpoints, (b) the ten SMAs temperature readings $\bx$ and limit $T^{MAX}$, and (c) control inputs of PWM duty cycles $\hat\bu$ as a percent. The two highlighted SMAs in (b) and (c) show the action of the safety supervisor.}
    \label{fig:horton_state}
    \vspace{-0.4cm}
\end{figure*}


\subsection{Experimental Design}


We developed a trajectory of setpoint angles $\bar\bq$ through manual experimentation that corresponded with the front leg ($L_1$) lifting off the ground.
In the experiment, Horton was allowed to move unencumbered from its start position (Fig. \ref{fig:horton_foot_height}(a)), with a wait until it converged to one balancing pose (Fig. \ref{fig:horton_foot_height}(b)).
Then, a human disturbance (a hand pushing the robot down) intervenes and temporarily makes some setpoint angles infeasible (Fig. \ref{fig:horton_foot_height}(c)). 
Finally, the human disturbance is removed and the feedback controller recovers Horton to the balancing state (Fig. \ref{fig:horton_foot_height}(d)). 
For verification of balancing in addition to the convergence of our bending sensor readings, we attached a red dot sticker to Horton's front foot to track its height, with time series data obtained from post-processing a video recording.


\subsection{Results}

The feedback control test took 130 seconds in total to perform self-balancing, safety of actuator states under human disturbance, and recovery. 
Fig. \ref{fig:horton_foot_height} shows the height of the lifted front foot from the ground during the experiment. 
In the first balancing stage, the front foot's height converged to approximately 8.5 mm, with small oscillations due to noise.
The human's hand caused a large interference with leg $L_3$, raising the front foot up to 39.5 mm. 
Finally, the foot height returned to around 13.5 mm in the recovery stage, and was maintained for more than 30 seconds. 
The average balancing foot height of 11.9 mm is approximately 15\% of the robot's height, considered large versus past work on rigid robots \cite{park_finite-state_2013}.


The safety supervisor successfully constrained actuator states during the test, as shown in experimental data for bending angles $\bq = [\theta_1, \; \hdots, \; \theta_5]^\top$, SMA temperatures $\bx = [T_1, \; \hdots, \; T_{10}]^\top$, and control inputs $\hat\bu = [u_1, \; \hdots, \; u_{10}]$ (Fig. \ref{fig:horton_state}).
Limbs $L_1$ and $L_2$ converged quickly to their specified setpoint, as did spine $S_2$, albeit with some steady-state error as it was forced to bend when the robot's center of gravity shifted.
In contrast, spine $S_1$ was commanded to an infeasible pose: at $T^{MAX}$, its SMA wires could not provide enough force to fully counter the gravitational load of the front leg.
As a result, the safety supervisor activated for the $S_1$- actuator, which remained below $T^{MAX}$ throughout the test (Fig. \ref{fig:horton_state}(b), blue line).
Similarly, the human disturbance caused supervisor activation for the $L_3+$ actuator from 40-77 sec., which remained around $T^{MAX}$ with some chatter before the robot was released (Fig. \ref{fig:horton_state}(b), orange line).
This experiment demonstrates the effectiveness of our supervisory control method.

\section{Discussion and Conclusion}
This paper demonstrates for the first time a soft legged robot with full pose control and verifiable safety on aspects of its operation. 
Horton is capable of dynamic balancing on two of its three legs, and maintaining safe operation during disturbances. 
This progress toward dynamic, controlled soft legged locomotion could significantly expand robotic exploration of unknown environments.
The safety framework is applicable to many other legged soft robots with actuator dynamics.


The balancing test confirms the safety specification for Horton's $S_1$- spine actuator.
However, chatter in $L_3+$ shows some temperatures slightly above $T_{MAX}$.
This is expected, as the supervisor's predictions use the thermal dynamics model (eqn. (\ref{eqn:umax})) which was not recalibrated per-SMA.
The spines' wires are 25\% longer than the legs' wires, so imprecise model parameters are the likely cause, which does not indicate theoretical issues.

The balancing test showed the robot maintaining a lifted leg in otherwise-unstable poses.
However, though the pose controller converged to the same values of $\bq$ both before and after the disturbance (Fig. \ref{fig:horton_state}(a)), the foot height converged to a different value (Fig. \ref{fig:horton_foot_height}).
This is also expected, since the constant curvature assumption does not hold for Horton's limbs (c.f. Fig. \ref{fig:horton_balancing} vs. Fig. \ref{fig:horton_hardware_and_system_architecture}(d)), and confirms that our choice of state space does not uniquely determine its kinematics.
Future work will focus on alternative representations for $\bq$.
We emphasize this article's contribution as proof-of-concept for safety in any balancing motion.


Hardware designs, including the planar limbs and tether, were motivated by this article's focus on fundamental control results.
Future work seeks to design a Horton robot for practical locomotion, untethered, in 3D.
Design modifications will reduce weight and power needs by drawing from past work \cite{Huang2019}.
Stiffer materials (and additional SMA wires) could produce motions more consistent with the constant-curvature assumption.
This may allow safe control algorithms to be extended to the robot's pose in addition to its actuators.

\bibliographystyle{IEEEtran}
\bibliography{references}
\end{document}